\let\proof\@undefined
\let\endproof\@undefined
\newtheorem{theorem}{Theorem}
\newcommand{\vect}[1]{\mathbf{#1}}
\DeclareMathOperator*{\argmin}{arg\,min}
\definecolor{gray}{rgb}{0.5, 0.5, 0.5}
\definecolor{gray}{rgb}{0.5, 0.5, 0.5}
\definecolor{lightgray}{rgb}{0.83, 0.83, 0.83}
\newcommand{\nlenv}[1]{\begin{tabular}{c} 
    #1
  \end{tabular}
}
\newtheorem*{problem}{Problem}
\title{\LARGE \bf
Probabilistic Radio-Visual Active Sensing \\for Search and Tracking}
\author{Luca~Varotto,
        Angelo~Cenedese,
        and~Andrea~Cavallaro%
\thanks{L. Varotto and A. Cenedese are with the Department of Information Engineering, University of Padova, Italy.
          Corresponding author: {\tt\small luca.varotto.5@phd.unipd.it}.}%
\thanks{A. Cavallaro is with the Centre for Intelligent Sensing, Queen Mary University of London, U.K.}
\thanks{This work was partially supported by the Department of Information Engineering under the BIRD-SEED TSTARK project and by the Gini Foundation of the University of Padova.}
}%
\begin{document}

\maketitle
\thispagestyle{empty}
\pagestyle{empty}

\begin{abstract}
Active Search and Tracking for search and rescue missions or collaborative mobile robotics relies on the actuation of a sensing platform to detect and localize a target. In this paper we focus on visually detecting a radio-emitting target with an aerial robot equipped with a radio receiver and a camera. Visual-based tracking provides high accuracy, but the directionality of the sensing domain may require long search times before detecting the target. Conversely, radio signals have larger coverage, but lower tracking accuracy. Thus, we design a Recursive Bayesian Estimation scheme that uses camera observations to refine radio measurements. To regulate the camera pose, we design an optimal controller whose cost function is built upon a probabilistic map. Theoretical results support the proposed algorithm, while numerical analyses show higher robustness and efficiency with respect to visual and radio-only baselines.  
\end{abstract}


\section{Introduction}\label{sec:intro}

{Active sensing} consists of controlling the sensor state to gather (more) informative observations~\cite{radmard2017active}
or to accomplish a task (e.g., find a target within a certain time budget~\cite{MTS_Rinner}). 
This control framework has been largely used to solve autonomous target search and tracking~\cite{shahidian2017single},
often relying on probabilistic approaches~\cite{detection_tracking_survey}:
data from onboard sensors and Recursive Bayesian Estimation (RBE) schemes~\cite{smith2013MonteCarlo}
are used to generate a probabilistic map (also known as belief map),
encoding the knowledge about potential target locations. The control problem is then cast as the optimization of a suitable objective function built upon the probabilistic map (e.g., time to detection~\cite{MTS_Rinner}, estimate uncertainty~\cite{shahidian2017single},
distance to the target~\cite{hasanzade2018rf}). 
Stochastic motion
and observation models~\cite{radmard2017active} account for the 
uncertainties on target dynamics and on the perception process, and allow to treat no-detection observations~\cite{negative_information}.
For these reasons, probabilistic approaches are suitable for real-life scenarios, which are also characterized by energy costs associated to the movement of the active sensing platform~\cite{liu2018energy}. 


$ $

\noindent \textbf{Related works -}
Typical modalities for active sensing include vision, audio and radio~\cite{radmard2017active,haubner2019active,shahidian2017single}. Visual-based tracking provides high accuracy~\cite{aghajan2009multi}
and does not require the target to use an emitting device. 
Occlusions and Field of View (FoV) directionality~\cite{aghajan2009multi}
limit the range, applicability and success of camera-only platforms~\cite{detection_tracking_survey}, especially for applications where time of detection is critical (e.g., search and rescue missions~\cite{SAR_radio}). To collect measurements on wider ranges, and reduce the duration of the search phase, 
acoustic~\cite{haubner2019active}
or radio-frequency (RF)~\cite{shahidian2017single} signals can be used.  
Despite the high localization accuracy of acoustic signals~\cite{haubner2019active}, sound pollution and
extra hardware requirements (e.g., microphone arrays)
are drawbacks of this technology~\cite{zafari2019survey}.  Conversely, RF signals are energy efficient, have large 
reception ranges ($\sim100 \; [m]$), and low hardware requirements, since only a receiver is needed; moreover, the Received Signal Strength Indicator (RSSI) is extracted from standard data packet traffic~\cite{zanella2016best}. 
For these reasons, RSSI-based localization systems widely appear in the literature and in commercial applications, despite environmental interference (e.g., cluttering and multi-path distortions) often limits their accuracy~\cite{zanella2016best}. 
Multi-modal sensor fusion techniques have been shown to overcome the inadequacies of uni-modal approaches, being more robust and reliable~\cite{DeepRL_gazeControl}. 

$ $

\noindent \textbf{Contributions -} This paper exploits the complementary benefits of radio and visual cues for visually detecting a radio-emitting target with an aerial robot, equipped with a radio receiver and a Pan-Tilt (PT) camera. We formulate the control problem within a probabilistic active sensing framework, where 
camera measurements refine radio ones within a RBE scheme, used to keep the map updated. The fusion of RF and camera sensor data for target search and tracking is an open problem.
To the best of authors' knowledge, this is the first attempt to combine radio and visual measurements 
within a single-platform probabilistic active sensing framework. 
Furthermore, unlike existing solutions operating on limited control spaces (e.g., platform position~\cite{shahidian2017single}
or camera orientation~\cite{DeepRL_gazeControl}), 
we propose a gradient-based optimal control, defined on a continuous space comprising both platform position and camera orientation. Theoretical and numerical analyses are provided to validate the effectiveness of the proposed algorithm. 
%
%
What emerges is that bi-modality is proven to increase the target localization accuracy; this, together with the availability of an integrated high-dimensional control space, leads to higher detection success rates, as well as superior time and energy efficiency with respect to radio-only or and vision-only counterparts.


\section{Problem statement}\label{sec:problem_formulation}

Fig. \ref{fig:scenario} shows the main elements of the problem scenario, namely the target and the sensing platform\footnote{Bold letters indicate (column) vectors, if lowercase, matrices otherwise. $\vect{I}_n$ is the $n$-dimensional identity matrix, while $\vect{0}_n$ is the zero vector of dimension $n$.
Regarding the statistical distributions, $\chi^2(n)$ denotes the chi-squared distribution with $n$ degrees of freedom, and $\mathcal{N}(x|\mu,\sigma^2)$ is the Gaussian distribution over the random variable $x$ with expectation $\mu$ and variance $\sigma^2$.
With the shorthand notation $z_{t_0:t_1}$ we indicate the sequence 
$\left\lbrace z_k \right\rbrace_{k=t_0}^{t_1}$.
The Euclidean distance between vectors $\vect{a}, \vect{b} \in \mathbb{R}^n$ is denoted as $d(\vect{a},\vect{b})$.
The orthogonal projection of $\vect{a} \in \mathbb{R}^n$ onto the plane $\Pi$ is $\vect{a}_\Pi$. The \mbox{$\ell$-th} entry of vector $\vect{a}$ is denoted as $\vect{a}(\ell)$.}.  

\noindent \textbf{Target - }The radio-emitting target
moves on a planar environment \mbox{$\Pi \subset \mathbb{R}^2$}, according to a (possibly) non-linear stochastic Markovian state transition model~\cite{radmard2017active} 
\begin{equation}\label{eq:target_dyn}
\vect{p}_{t+1} = f(\vect{p}_{t},\bm{\eta}_{t}).
\end{equation}
where $\vect{p}_t \in \Pi$ is the target position at time $t$, referred to the global 3D reference frame $\mathcal{F}_0$; when expressed in $\mathbb{R}^3$, it is referred as $\vect{p}_t^+ = [ \, \vect{p}_t^\top \quad 0 \,]^\top$.
The uncertainty on the underlying target movements are captured by 
the stochastic process noise $\bm{\eta}_t$. 
The probabilistic form of \eqref{eq:target_dyn}, namely $p(\vect{p}_{t+1}|\vect{p}_{t})$, is known as process model~\cite{radmard2017active}.  

\noindent \textbf{Sensing platform - }The sensing platform is an unmanned aerial vehicle (UAV), equipped with an omnidirectional radio receiver and a PT camera 
endowed with processing capabilities
and a real-time target detector~\cite{yolo}.
The state of the platform is the camera pose
\begin{equation}\label{eq:platform_state}
\begin{split}
    & \vect{s}_t = \begin{bmatrix} \vect{c}_t^\top & \bm{\psi}_t^\top \end{bmatrix}^\top, \\
    & \vect{c}_t \in \mathbb{R}^3; \quad \bm{\psi}_t = \begin{bmatrix} \alpha_t & \beta_t \end{bmatrix}^\top \in [-\pi/2+\theta,\pi/2-\theta]^2.
\end{split}
\end{equation}
The UAV position $\vect{c}_t$ is referred to $\mathcal{F}_0$, it is supposed to coincide with the camera focal point and its altitude is fixed (i.e., non-controllable); $\alpha_t$ (resp. $\beta_t$) is the pan (resp. tilt) angle w.r.t. the camera inertial reference frame;
$\theta$ is the half-angle of view. The state follows a linear deterministic Markovian transition model~\cite{radmard2017active}
\begin{equation}\label{eq:sensor_dynamics}
    \vect{s}_{t+1} = \vect{s}_t + \vect{u}_t; \; \vect{u}_t = \begin{bmatrix} \vect{u}_{\vect{c},t}^\top \quad \vect{u}_{\bm{\psi},t}^\top \end{bmatrix}^\top \in \mathcal{A}
\end{equation}
where $\mathcal{A}$ is the control space.
It comprises all possible control inputs that can be applied to the platform to regulate position and attitude. In particular, being the UAV altitude fixed, we focus on a planar control $\vect{u}_{\vect{c}_\Pi,t}$, acting on the projection $\vect{c}_{\Pi,t}$. 
Inspired by real-life scenarios, the UAV movements are considered energy-consuming with a linear dependence on the flying distance~\cite{liu2018energy}, that is
\begin{equation}\label{eq:energy_model}
    \Delta E_t = d(\vect{c}_{t+1},\vect{c}_{t}),
\end{equation}
where $\Delta E_t$ is the energy used to move the platform from $\vect{c}_{t}$ to $\vect{c}_{t+1}$. 
The total available energy is denoted as $E_{tot}$.  

\begin{figure}[t]
\centering
\includegraphics[scale=0.25
]{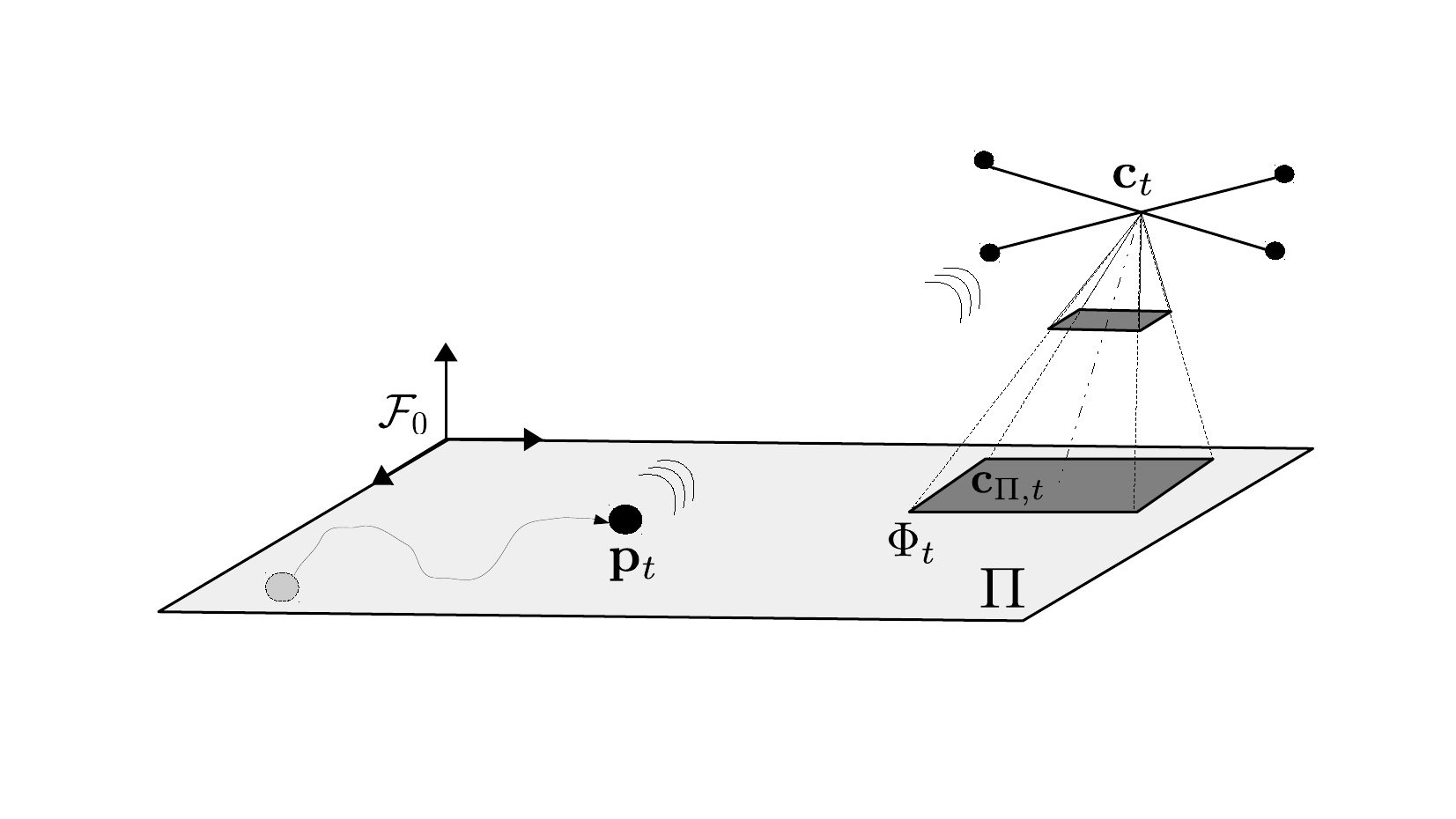}
\vspace{-0.9cm}
\caption{Problem scenario. A target moves in a planar environment and establishes a radio signal communication with a camera-embedded UAV. The objective is to control the camera pose so that the target is visually detected.}
\label{fig:scenario}
\vspace{-0.5cm}
\end{figure}

Motivated by the long reception ranges of radio signals~\cite{zanella2016best}, we suppose the target to be always within the range of the platform receiver and, from received data packets, the RSSI value $r_t\in\mathbb{R}$ is extracted. This is related to the platform-target distance $d(\vect{c}_t,\vect{p}_t^+)$ according to the log-distance path loss model~\cite{zanella2016best} 
\begin{equation}\label{eq:path_loss_model}
r_t = \kappa - 10 n\log_{10} \left( d(\vect{c}_t,\vect{p}_t^+) \right).
\end{equation}
The parameter $\kappa$ is the RSSI at a reference distance (e.g., $1\;[m]$), while $n$ is the attenuation gain; both $\kappa$ and $n$ are estimated via offline calibration procedures~\cite{zanella2016best}.
The radio observation model is
\begin{equation}\label{eq:obs_Rx}
z_{\text{RF},t} =
\begin{cases} 
r_t + v_{\text{RF},t} , & t = M T_{\text{RF}}, \; M \in \mathbb{N}\\
\emptyset, & \text{otherwise}
\end{cases}
\end{equation}
where $T_{\text{RF}}$ is the receiver sampling interval, \mbox{$v_{\text{RF},t} \sim \mathcal{N}\left(v| 0,\sigma_{\text{RF}}^2 \right)$} is the noise in RSSI data, $\emptyset$ is 
a measurement without target information.

The camera observation model follows the projection perspective geometry~\cite{aghajan2009multi}
\begin{equation}\label{eq:obs_c}
\vect{z}_{c,t} =  
\begin{cases}  
\vect{P}(\vect{s}_t)\widetilde{\vect{p}}_t + \vect{v}_{c,t}, & D_t =1 \text{ and } t = N T_a, \; N \in \mathbb{N}\\
\emptyset, & \text{otherwise}
\end{cases}
\end{equation}
where a successful target detection is indicated by the value $1$ of the binary variable $D_t$,
$\vect{v}_t \sim \mathcal{N}\left(\vect{v}| \vect{0}_2,\bm{\Sigma}_{c} \right)$ is the noise of camera observations, $\widetilde{\vect{p}}_t$ is the homogeneous representation of $\vect{p}_t^+$ 
and 
%
%
\begin{equation}
\vect{P}(\vect{s}_t) = \begin{bmatrix} \vect{I}_2 & \vect{0}_2 \end{bmatrix} \vect{K} \begin{bmatrix} \vect{R}(\bm{\psi_t}) & \vect{c}_t \end{bmatrix} \in \mathbb{R}^{2 \times 4}
\end{equation}
is the camera projection matrix that maps $\widetilde{\vect{p}}_t$ onto the image plane $\mathcal{I}$. $\vect{P}(\vect{s}_t)$ depends on $\vect{K} \in \mathbb{R}^{3 \times 3}$, the matrix of intrinsic parameters, and $\vect{R}(\bm{\psi_t})$, the camera rotation matrix w.r.t. $\mathcal{F}_0$. The camera frame rate $T_a$ satisfies
\begin{equation}\label{eq:sampling_time_relation}
    T_{\text{RF}} = \nu T_a, \; \nu>1
\end{equation}
since radio reception is typically characterized by higher sample rates than  cameras~\cite{zhuang2016smartphone,vollmer2011high}. Without any loss of generality, we consider a normalized frame rate ($T_a=1$). 

\begin{problem} With this formalism, the \emph{visual target detection problem} can be formulated as the control of the platform state $\vect{s}_t$ (through $\vect{u}_t$) to realize event $D_t=1$.
\end{problem}

\section{Methodology}\label{sec:method}

To solve the problem defined in Sec. \ref{sec:problem_formulation}, a probabilistic bi-modal active sensing approach is proposed. As shown in
Fig.~\ref{fig:pipeline}, radio-visual measurements are aggregated into a single likelihood function, which is used to update the belief map through a RBE scheme. The map is then fed into the controller to regulate the platform movements.

\noindent \textbf{Probabilistic map -} 
Given the observations $\vect{z}_{1:t}$, RBE provides a two-stage procedure to recursively update the target belief state, namely the posterior distribution $p(\vect{p}_t| \vect{z}_{1:t})$. The prediction stage involves using the process model \eqref{eq:target_dyn} to obtain the prior of the target position via the \mbox{Chapman-Kolmogorov} equation~\cite{radmard2017active}.
As a new observation $\vect{z}_{t}$ becomes available, the Bayes rule~\cite{smith2013MonteCarlo} updates the target belief state.
In this work, RBE is implemented through particle filtering~\cite{smith2013MonteCarlo}. 
The density $p(\vect{p}_t|\vect{z}_{1:t})$ is approximated with a sum of $N_s$ Dirac functions centered in the particles $\{ \vect{p}_t^{(i)} \}_{i=1}^{N_s}$, that is
\begin{equation}\label{eq:posterior_PF}
p(\vect{p}_t|\vect{z}_{1:t}) \approx \sum_{i=1}^{N_s} w_t^{(i)} \delta \left( \vect{p}_t - \vect{p}_t^{(i)} \right),
\end{equation}
where $w_t^{(i)}$ is the weight of particle $\vect{p}_t^{(i)}$ and it holds
\begin{subequations}\label{eq:PF}
\begin{align}
& \vect{p}_t^{(i)} = f\left( \vect{p}_{t-1}^{(i)},\bm{\eta}_{t-1} \right) \; \text{PREDICTION} \label{eq:prediction} \\
& w_t^{(i)} \propto w_{t-1}^{(i)}p\left(\vect{z}_t|\vect{p}_t^{(i)}\right) \; \text{UPDATE} \label{eq:update}
\end{align}
\end{subequations}

\tikzset{
block/.style = {draw, fill=white, rectangle, minimum height=3em, minimum width=3em},
block_transp/.style = {rectangle, minimum height=3em, minimum width=3em},
sum/.style= {draw, fill=white, circle, node distance=1cm}}
\begin{figure}[t!]
\center
\begin{tikzpicture}[auto, node distance=3cm,>=latex',scale=0.6, transform shape]

\node [block] (sensing) { \nlenv{sensing\\ units}};
\node [block,left of = sensing,xshift=-2cm] (g) {$f(\vect{p}_{t-1},\bm{\eta}_{t-1})$};
\node [block_transp, left of = g, xshift=1cm] (omega) { $\bm{\eta}_{t-1}$};      
\node [block, below of = g, yshift=1.5cm] (time_update_g) { $z^{-1}$};
\node [block, right of = sensing, xshift=1.5cm] (sensor dynamics) { $\vect{s}_t = \vect{s}_{t-1} + \vect{u}_{t-1}^*$};
\node [block, below of = sensor dynamics, yshift=1.2cm,xshift=1cm] (time_update_sensor) { $z^{-1}$};
\node [block, below of = sensing, xshift=-1.2cm, yshift=-1.2cm] (RF_likelihood) { $p(z_{\text{RF},t}|\vect{p}_t,\vect{s}_t)$};
\node [block, below of = sensing, xshift=1.2cm,yshift=-1.2cm] (visual_likelihood) { $p(\vect{z}_{c,t}|\vect{p}_t,\vect{s}_t)$};
\node [sum, below of = sensing, yshift = -4.5cm] (prod) { $\times$};
\node [block, below of = prod, yshift = 1.7cm] (likelihood) { $p(\vect{z}_t|\vect{p}_t,\vect{s}_t)$};
\node [block, below of = sensor dynamics, yshift = -3.8cm] (RBE) { RBE};
\node [block, above of = RBE, yshift=-0.3cm] (control) { $\argmin{ J(\vect{s}_t + \vect{u}_t) }$};

\draw [->] (omega.east) --  node{} (g.west);
\draw [->] (g.east)  
-- ++ (2,0) 
-- node[name=output_g]{ $\vect{p}_{t}$} (sensing.west);
\draw [->]  ($(g.east)+(0.5,0)$) |- (time_update_g.east) ;
\draw [->]  (time_update_g.west) -- ++ (-1,0) -- node[]{ $\vect{p}_{t-1}$} ++ (0,1.2) -- ($(g.west)-(0,0.3)$);
\draw [->] (sensor dynamics.west) -- node[xshift=-0.2cm]{$\vect{s}_{t}$} (sensing.east);
\draw [->] ($(sensor dynamics.west)+(-1,0)$) |- (time_update_sensor.west) ;
\draw [->] (time_update_sensor.east) -- ++ (0.3,0) -- node[]{ $\vect{s}_{t-1},\vect{u}_{t-1}^*$} ++ (0,1.8) -- ($(sensor dynamics.east)$);
\draw [->] ($(time_update_sensor.west) + (-0.5,0)$) -- ($(control.north)$) ;
\draw [->] ($(sensing.south) -(0.3,0)$) -- ++ (0,-2) -- ++ (-0.9,0) --node[]{ $z_{\text{RF},t}$} (RF_likelihood.north);
\draw [->] ($(sensing.south) +(0.3,0)$) -- ++ (0,-2) -- ++ (0.9,0) -- node[xshift=-0.7cm]{ $\vect{z}_{c,t}$} (visual_likelihood.north);
\draw [->] (RF_likelihood.south) |- node{} (prod.west);
\draw [->] (visual_likelihood.south) |- node{} (prod.east);
\draw [->] (prod.south) -- node{} (likelihood.north);
\draw [->] (likelihood.east) -- node{} (RBE.west);
\draw [->] (RBE.north) -- node[]{ $p(\vect{p}_{t}|\vect{z}_{1:t})$} (control.south);
\draw [->] ($(RBE.north)+(0,0.5)$) -- ++ (1,0) -- ++ (0,-1.05) -- (RBE.east);
\draw [->] ($(control.north)+(1,0)$) -- node[]{ $\vect{u}_t^*$}(time_update_sensor.south);

\begin{scope}[on background layer]
\draw [fill=lightgray,dashed]  ($(omega) + (-0.5,0.7)$) rectangle ($(time_update_g) + (2,-0.7)$);
\draw [fill=lightgray,dashed] ($(RF_likelihood) + (-1.2,0.8)$) rectangle ($(likelihood) + (2.3,-0.7)$);
\draw [fill=lightgray,dashed] ($(control.west) + (-0.2,0.8)$) rectangle ($(control.east) + (0.2,-0.8)$);
\draw [fill=lightgray,dashed] ($(sensor dynamics.west) + (-1.8,0.7)$) rectangle ($(time_update_sensor.east) + (0.5,-0.7)$);
\draw [fill=lightgray,dashed] ($(RBE.west) + (-1,0.7)$) rectangle ($(RBE.east) + (1,-0.7)$);
\end{scope}
\draw [dashed] ($(RF_likelihood) + (-1.4,5.1)$) rectangle ($(control) + (2.2,-4.6)$);

\node [block_transp, above of = g, yshift=-2cm] (target_annotation) {\textit{Target dynamics}};
\node [block_transp, below of = sensor dynamics, xshift=-1.7cm,yshift=0.25cm] (sensor_annotation) { \textit{Platform dynamics}};
\node [block_transp, below of = likelihood, yshift=2cm] (likeliood_annotation) {\textit{Bi-modal likelihood}};
\node [block_transp, below of = control, yshift=2cm,xshift=1cm] (control_annotation) { \textit{Control}};
\node [block_transp, below of = RBE, yshift=2cm] (RBE_annotation) { \textit{Probabilistic map}};
\node [block_transp, above of = sensor dynamics, yshift=-1.8cm] (sensor_annotation) { \textit{Sensing platform}};

\end{tikzpicture}
\caption{ 
Scheme of the proposed Probabilistic Radio-Visual Active Sensing algorithm.}
\label{fig:pipeline}
\vspace{-0.5cm}
\end{figure}
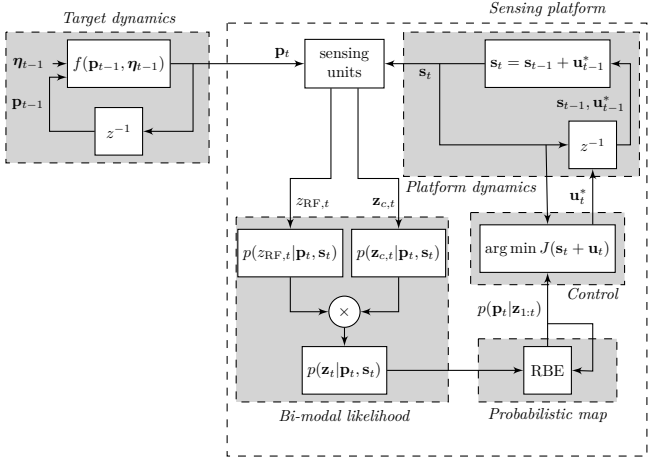

\textbf{Radio-visual likelihood - } 
The probabilistic form of an observation model is referred to as likelihood function. 
In active sensing frameworks, the likelihood accounts for both target and platform state,
that is $p(\vect{z}_t|\vect{p}_t,\vect{s}_t)$. 
In our case, 
the \emph{RF likelihood} is
\begin{equation}\label{eq:likelihood_RF}
p(z_{\text{RF},t}|\vect{p}_t,\vect{s}_t) =
\begin{cases}
\mathcal{N}\left(z| r_t,\sigma_{\text{RF}}^2 \right), & t=MT_{\text{RF}} \\
1, & \text{otherwise}.
\end{cases}
\end{equation}
Thus, $z_{\text{RF},t}$ updates the belief map only when it carries information on the target position (i.e., 
at $t=MT_{\text{RF}}$).

To define the visual likelihood from the observation model \eqref{eq:obs_c}, we consider the detection event as a Bernoulli random variable with success probability
\begin{equation}\label{eq:POD}
p(D_t=1|\vect{p}_t,\vect{s}_t)  =
\begin{cases}
\Upsilon(\vect{p}_t,\vect{s}_t), & \vect{p}_t \in \Phi(\vect{s}_t) \\
0, & \text{otherwise}
\end{cases}
\end{equation}
with $\Phi(\vect{s}_t)$ camera FoV onto $\Pi$ (see Fig. \ref{fig:scenario}), and
\begin{equation}\label{eq:Upsilon}
\Upsilon(\vect{p}_t,\vect{s}_t) =  \left[ 1 + e^{\gamma(d(\vect{c}_t,\vect{p}_t^+)/f - \epsilon)} \right]^{-1} \left( 1 + e^{-\gamma \epsilon} \right).
\end{equation}
From \eqref{eq:POD} the target can be detected only if inside the camera FoV and, from \eqref{eq:Upsilon} the detection probability is proportional to the resolution $d(\vect{c}_t,\vect{p}_t^+)/f$ at which the target is observed, where $f$ is the camera focal length; \mbox{$\epsilon > 0$} is the resolution at which the target is no longer well detectable, $\gamma > 0$ is the rate of the target detectability decrease. Then, the \textit{visual likelihood} is
\begin{equation}\label{eq:likelihood_V}
\begin{split}
& p(\vect{z}_{c,t}|\vect{p}_t,\vect{s}_t) = p(\vect{z}_{c,t}|\vect{p}_t,\vect{s}_t,D_t)p(D_t|\vect{p}_t,\vect{s}_t) =  \\
&
\begin{cases}
\mathcal{N}(\vect{z}| \vect{P}(\vect{s}_t)\widetilde{\vect{p}}_t,\bm{\Sigma}_c)\Upsilon(\vect{p}_t,\vect{s}_t), & D_t = 1,\; \vect{p}_t \in \Phi(\vect{s}_t) \\
0, & D_t = 1,\; \vect{p}_t \not\in \Phi(\vect{s}_t) \\
1-\Upsilon(\vect{p}_t,\vect{s}_t), & D_t = 0,\; \vect{p}_t \in \Phi(\vect{s}_t) \\
1, & D_t = 0,\; \vect{p}_t \not\in \Phi(\vect{s}_t) 
\end{cases}
\end{split}
\end{equation}

By aggregating radio and visual likelihoods, the following \emph{radio-visual likelihood} is obtained
\begin{equation}\label{eq:bimodal_likelihood_radioVisual} 
    p(\vect{z}_t |\vect{p}_t , \vect{s}_t ) = p(z_{\text{RF}.t} |\vect{p}_t ,\vect{s}_t )p(\vect{z}_{c,t} |\vect{p}_t , \vect{s}_t ).
\end{equation} 
Then, the update stage \eqref{eq:update} of the particle filter is applied using \eqref{eq:bimodal_likelihood_radioVisual}.

\noindent \textbf{Controller -} The platform control input is computed by solving the following optimization 
\begin{equation}\label{eq:control_input}
\mathcal{C}: \; \vect{u}_t^* =
\argmin_{\vect{u}_t \in \mathcal{A}} J(\vect{s}_t+\vect{u}_t), \text{ s.t. } d \left( \vect{c}_{t+1},\vect{c}_{t} \right) \leq E_t
\end{equation}
where $E_t$ is the residual energy at time $t$, computed as
\begin{equation}\label{eq:residual_energy}
    E_t = E_{tot} - \sum_{k=0}^{t-1} \Delta E_k = E_{tot} - \sum_{k=0}^{t-1} d(\vect{c}_{k+1},\vect{c}_k).
\end{equation}
The cost function is
$
J(\vect{c}_{\Pi,{t+1}}) = \frac{1}{2} d\left(\vect{c}_{\Pi,{t+1}}, \hat{\vect{p}}_t\right)^2,
$
where 
$\hat{\vect{p}}_t$ is the MAP estimate
of the target position.
Note that $J(\cdot)$ is function of $\vect{s}_t+\vect{u}_t$, since $\vect{c}_{\Pi,t+1}$ is related to $\vect{s}_{t+1}$ through the inverse perspective geometry~\cite{aghajan2009multi}.
Moreover, $J(\cdot)$ extracts information from the belief map, according to the probabilistic active sensing approach (Fig. \ref{fig:pipeline}).

The convexity of $J(\cdot)$ w.r.t. $\vect{c}_{\Pi,t+1}$ allows to solve \eqref{eq:control_input} with the gradient-based control law
\begin{equation}
\label{eq:gradient_based_controller}
\!\begin{cases}
\vect{s}_{\tau+1} = \vect{s}_\tau + \vect{u}_\tau, \; \tau \in [0, \tau_{max}] \\
\begin{split}
%
\\[-4pt]
\vect{u}_\tau 
    & = \!
    \begin{bmatrix} 
    \begin{array}{c}
    \vect{u}_{\vect{c},t} \\ 
    \hline
    \vect{u}_{\bm{\psi},t} \\[4pt]
    \end{array}
    \end{bmatrix}
    \!
    = 
    - \!\begin{bmatrix}
    \begin{array}{c|c}
    \vect{G}_{\vect{c}} & \vect{0} \\
    \hline
    \vect{0} & \vect{G}_{\bm{\psi}}
    \end{array}
    \end{bmatrix} 
    \!
    \begin{bmatrix} 
    \begin{array}{c}
    \frac{\partial J(\vect{c}_{\Pi,\tau+1})}{\partial \vect{c}_{\Pi}} \\ 
    0 \\
    \hline \\[-4pt]
\frac{\partial J(\vect{c}_{\Pi,\tau+1})}{\partial \bm{\psi}} \\[4pt] 
\end{array}\end{bmatrix}
\end{split}
\end{cases}
\end{equation}
where $\tau_{max}$ accounts for the maximum number of iterations in order to accommodate the next incoming measurement at $t+1$. 
$\vect{G}_{\vect{c}} \in \mathbb{R}^{3 \times 3}$ and $\vect{G}_{\bm{\psi}} \in \mathbb{R}^{2 \times 2}$ are suitable control gain matrices.
By choosing  $\vect{G}_{\vect{c}}$ entries small, energy is preserved, since \eqref{eq:gradient_based_controller} commands short UAV movements. Conversely, larger $\vect{G}_{\vect{c}}$ and $\vect{G}_{\bm{\psi}}$ lead to a more reactive system, capable of getting closer to the setpoint $\hat{\vect{p}}_t$ more quickly. 
It is important to remark that $J(\cdot)$ is \emph{purely-exploitative} and \emph{not energy-aware}: in $\mathcal{C}$ energy appears only in the constraint and no energy preservation~\cite{liu2018energy},
nor information-seeking (explorative)~\cite{radmard2017active} criteria are included.  


\section{Theoretical results }\label{sec:theoretical}


This Section formally motivates the use of an action space involving the entire camera pose, as in \eqref{eq:sensor_dynamics}, and supports the choice of a combined radio-visual perception system, as in \eqref{eq:bimodal_likelihood_radioVisual}. 
In uncluttered single-target scenarios the particle weight distribution is a possible 
indicator of the target localizability: highly-weighted regions 
allow to focus the position estimate, while uniform weight patterns suggest ambiguity in the target localization. 
In this respect, we show that radio-only solutions need the sensing platform to move in order to solve localization ambiguity (Ths.~1-2 that follow), which can be conversely attained through a radio-visual approach also with a static platform (Th.~3).

\begin{theorem}\label{th:axis_symmetric_ambiguity}
Let the following hypotheses hold
\begin{enumerate}
\item the target moves according to an unbiased random walk, i.e.,  \mbox{$p(\vect{p}_{t+1}|\vect{p}_t) = \mathcal{N}(\vect{p}|\vect{p}_t,\sigma^2 \vect{I}_2)$};
\item the platform is static, i.e. $\vect{c}_t = \vect{c}; \; \forall t$;
\item the RBE scheme updates through \eqref{eq:update} exploiting only the RF likelihood \eqref{eq:likelihood_RF}.
\end{enumerate}
Then, 
\begin{equation}\label{eq:axisSymmetric_condition}
\begin{split}
    & \mathbb{E} \left[ \omega_t^{(i)} | z_{\text{RF},1:t}\right] = \mathbb{E} \left[ \omega_t^{(j)} | z_{\text{RF},1:t}\right], \; t \geq 0 \\
    & \forall i,j \in [1,\dots,N_s] \text{ s.t. } d(\vect{p}_0^{(i)},\vect{c}_\Pi) = d(\vect{p}_0^{(j)},\vect{c}_\Pi)
\end{split}
\end{equation}
\end{theorem}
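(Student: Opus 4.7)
The plan is to unroll the weight recursion, exploit the fact that the RF likelihood is radially symmetric about $\vect{c}_\Pi$, and then invoke the rotational invariance of the isotropic Gaussian random walk to conclude that two particles starting at the same distance from $\vect{c}_\Pi$ induce trajectories whose distance-processes are equal in distribution.

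First I would iterate the update rule \eqref{eq:update} under hypothesis (iii) to obtain, up to a normalization constant independent of $i$,
\begin{equation*}
    w_t^{(i)} \propto w_0^{(i)} \prod_{k=1}^{t} \mathcal{N}\!\left(z_{\text{RF},k}\,\big|\,r_k^{(i)},\sigma_{\text{RF}}^2\right),
    \qquad
    r_k^{(i)} = \kappa - 10 n \log_{10} d\!\left(\vect{c},\vect{p}_k^{(i)+}\right).
\end{equation*}
Taking $w_0^{(i)} = w_0^{(j)}$ (uniform initial weights), every factor in the product depends on $\vect{p}_k^{(i)}$ only through the scalar $D_k^{(i)} := d(\vect{c}_\Pi,\vect{p}_k^{(i)})$, since $\vect{c}_t=\vect{c}$ is fixed and $\vect{p}_k^{(i)} \in \Pi$ implies $d(\vect{c},\vect{p}_k^{(i)+}) = \sqrt{D_k^{(i)2}+h^2}$, with $h$ the fixed altitude.

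Next I would analyze the process $\{D_k^{(i)}\}_{k\ge 0}$. Under hypothesis (i), $\vect{p}_k^{(i)} = \vect{p}_0^{(i)} + \sum_{\ell=0}^{k-1}\bm{\eta}_\ell$ with $\bm{\eta}_\ell \stackrel{\text{iid}}{\sim}\mathcal{N}(\vect{0}_2,\sigma^2\vect{I}_2)$. Fix any two indices $i,j$ with $D_0^{(i)} = D_0^{(j)}$, and let $\vect{Q}$ be the rotation of $\Pi$ about $\vect{c}_\Pi$ that maps $\vect{p}_0^{(i)}$ onto $\vect{p}_0^{(j)}$. Because the law of $\bm{\eta}_\ell$ is rotationally invariant, $\vect{Q}\bm{\eta}_\ell \stackrel{d}{=} \bm{\eta}_\ell$, so
\begin{equation*}
    \vect{p}_k^{(j)} \stackrel{d}{=} \vect{Q}\!\left(\vect{p}_0^{(i)} + \sum_{\ell=0}^{k-1}\bm{\eta}_\ell\right) = \vect{Q}\vect{p}_k^{(i)},
\end{equation*}
jointly in $k$. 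Since $\vect{Q}$ is an isometry fixing $\vect{c}_\Pi$, this yields
\begin{equation*}
    \left(D_1^{(j)},\dots,D_t^{(j)}\right) \stackrel{d}{=} \left(D_1^{(i)},\dots,D_t^{(i)}\right).
\end{equation*}

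Finally, because the product of RF likelihoods is a measurable function of $(D_1^{(\cdot)},\dots,D_t^{(\cdot)})$ and the conditioning variables $z_{\text{RF},1:t}$ are the same deterministic quantities in both expectations, the equality in distribution transfers to equality of conditional expectations, which gives \eqref{eq:axisSymmetric_condition} after normalizing both weights by the same (random) normalizer common to all particles.

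The only delicate point is the rotational-symmetry step: one must verify that the rotation $\vect{Q}$ can be applied jointly across time so that the whole trajectory, and not just each marginal $\vect{p}_k$, is transported correctly; this is where the isotropy of the process noise (hypothesis (i)) and the fact that $\vect{c}_\Pi$ is a fixed rotation center (hypothesis (ii)) are jointly essential.
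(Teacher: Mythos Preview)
Your proof is correct and takes a different---arguably cleaner---route than the paper's. The paper works explicitly: it expands $d_t^{(i),2} = \lVert \vect{p}_0^{(i)} + \bar{\bm{\eta}}_{t-1} - \vect{c}_\Pi \rVert_2^2$ into the deterministic term $d_0^{(i),2}$, a cross term distributed as $\mathcal{N}(0,\,4t\sigma^2 d_0^{(i),2})$, and a quadratic term distributed as $t\sigma^2\chi^2(2)$, so that the law of $d_t^{(i),2}$ depends on the initial particle only through $d_0^{(i)}$. This is essentially the $k$-th marginal of your result, derived by direct computation rather than by your rotational coupling. Your approach buys the full joint law of $(D_1^{(\cdot)},\dots,D_t^{(\cdot)})$ in one stroke, which is what the product form of the weight actually requires; the paper's argument nominally establishes only the marginal at each time and then asserts that $\omega_t^{(i)}$ is ``a (non-linear) function of $d_t^{(i),2}$'', glossing over the dependence on the whole distance trajectory. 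Conversely, the paper's explicit distributional formula is reused in Theorem~\ref{th:nonStatic_Rx} to exhibit the extra $(\vect{p}_0^{(i)}-\vect{c}_{\Pi,0})^\top\bar{\vect{u}}_{t-1}$ term that breaks the symmetry when the platform moves, so the computational route has a payoff downstream that your symmetry argument does not immediately provide.

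One minor point: when you write $\vect{Q}\vect{p}_k^{(i)}$ you mean the affine isometry $\vect{x}\mapsto \vect{c}_\Pi + \vect{Q}(\vect{x}-\vect{c}_\Pi)$; make that explicit, since otherwise the identity $\vect{Q}\vect{p}_0^{(i)}=\vect{p}_0^{(j)}$ only holds when $\vect{c}_\Pi=\vect{0}_2$.
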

\begin{proof}
The dynamic model associated to the target unbiased random walk is 
\begin{equation}
    \vect{p}_{t+1} = \vect{p}_t + \bm{\eta}_t, \quad \bm{\eta}_t \sim \mathcal{N}(\bm{\eta}|\vect{0}_2,\sigma^2\vect{I}_2)
\end{equation}
Equivalently,
$
    \vect{p}_t = \vect{p}_0 + \sum_{k=0}^{t-1} \bm{\eta}_k, \; t>0.
$

\noindent
Given $\bm{\eta}_t, \; \bm{\eta}_s$ i.i.d. for any $s \neq t$, it follows
\begin{equation}
    \bar{\bm{\eta}}_{t-1} := \sum_{k=0}^{t-1}\bm{\eta}_k \sim \mathcal{N} \left(\bm{\eta} | \vect{0}_2,t\sigma^2\vect{I}_2 \right).
\end{equation}
Then, the squared distance $d_t^{(i),2} := d(\vect{p}_t^{(i)},\vect{c}_\Pi)^2$ is 
\begin{equation}
\begin{split}
    d_t^{(i),2} 
    & = \lVert \vect{p}_t^{(i)} - \vect{c}_\Pi \rVert_2^2  = \lVert \vect{p}_0^{(i)} + \bar{\bm{\eta}}_{t-1} - \vect{c}_\Pi \rVert_2^2 \\
    & = \sum_{\ell=1}^2 \left( p_0^{(i)}(\ell) - c_\Pi(\ell) \right)^2 + \sum_{\ell=1}^2 \bar{\eta}_{t-1}(\ell)^2  \\
    & + 2\sum_{\ell=1}^2 \left( p_0^{(i)}(\ell) - c_\Pi(\ell) \right) \bar{\eta}_{t-1}(\ell), \; t > 0.
\end{split}
\end{equation}
It holds,
\begin{equation}
\small
 2\left( p_0^{(i)}(\ell) - c_\Pi(\ell) \right)\! \bar{\eta}_{t-1}(\ell) \sim  \! \mathcal{N} \left( \! \eta | 0,4t\sigma^2\left( p_0^{(i)}(\ell) - c_\Pi(\ell) \right)^2 \!\right)
\end{equation}
and, since the components of $\bar{\bm{\eta}}_{t-1}$ are i.i.d. with distribution $\mathcal{N}\left(0,t\sigma^2\right)$,
$
    \sum_{\ell=1}^2 \bar{\eta}_{t-1}(\ell)^2 \sim t\sigma^2 \chi^2(2).
$
Finally, recalling that $\sum_{\ell=1}^2 ( p_0^{(i)}(\ell) - c_\Pi(\ell) )^2 = d_0^{(i)}$, we get
\begin{align}\label{eq:distance_distribution_th1}
    & d_t^{(i),2} \sim \mathcal{N}(d^2 |\mu_d, \sigma_d^2 ) + t\sigma^2 \chi^2(2), \; t \geq 0 \\
    & \mu_d = d_0^{(i),2}, \quad \sigma_d^2 = 4t\sigma^2d_0^{(i),2}.
\end{align}
From \eqref{eq:distance_distribution_th1}, $d_t^{(i),2}$ is identically distributed for all particles with the same initial distance from the platform. Given $z_{\text{RF},1:t}$, $\omega_t^{(i)}$ is a \mbox{(non-linear)} function of $d_t^{(i),2}$, through \eqref{eq:likelihood_RF} and \eqref{eq:path_loss_model}. Then, condition \eqref{eq:axisSymmetric_condition} follows.
\end{proof}

\begin{theorem}\label{th:nonStatic_Rx}
Let the following hypotheses hold
\begin{enumerate}
\item the target moves according to an unbiased random walk, i.e.,  \mbox{$p(\vect{p}_{t+1}|\vect{p}_t) = \mathcal{N}(\vect{p}|\vect{p}_t,\sigma^2 \vect{I}_2)$};
\item the platform moves at a fixed altitude according to a deterministic linear Markovian motion, i.e., $\vect{c}_{\Pi,t+1} = \vect{c}_{\Pi,t} + \vect{u}_{\vect{c}_\Pi,t}$;
\item the RBE scheme updates through \eqref{eq:update} exploiting only the RF likelihood \eqref{eq:likelihood_RF}.
\end{enumerate}
Then, 
\begin{equation}
\begin{split}
    & \exists (i,j) \in [1,N_s] \text{ s.t. } d(\vect{p}_0^{(i)},\vect{c}_{\Pi,0}) = d(\vect{p}_0^{(j)},\vect{c}_{\Pi,0})  \\
    & \text{ and } \mathbb{E} \left[ \omega_t^{(i)} | z_{\text{RF},1:t}\right] \neq \mathbb{E} \left[ \omega_t^{(j)} | z_{\text{RF},1:t}\right],\; t > 0
\end{split}
\end{equation}
\end{theorem}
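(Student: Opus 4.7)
The plan is to adapt the derivation from the proof of Theorem 1 while tracking the additional deterministic drift introduced by the moving platform, and then to exhibit an explicit pair of particles whose weight distributions differ.

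First, I would introduce the cumulative platform displacement $\vect{\Delta}_t := \vect{c}_{\Pi,t} - \vect{c}_{\Pi,0} = \sum_{k=0}^{t-1} \vect{u}_{\vect{c}_\Pi,k}$, which is deterministic by hypothesis (2), and the initial offset $\vect{q}_0^{(i)} := \vect{p}_0^{(i)} - \vect{c}_{\Pi,0}$ with $\lVert \vect{q}_0^{(i)} \rVert = d_0^{(i)}$. Repeating the steps leading to \eqref{eq:distance_distribution_th1}, the squared instantaneous distance becomes $d_t^{(i),2} = \lVert \vect{q}_0^{(i)} - \vect{\Delta}_t + \bar{\bm{\eta}}_{t-1} \rVert^2$ with $\bar{\bm{\eta}}_{t-1} \sim \mathcal{N}(\bm{\eta}|\vect{0}_2, t\sigma^2 \vect{I}_2)$. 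Expanding the square yields a distribution of the same shape as in Theorem 1, namely a Gaussian plus a scaled $\chi^2(2)$, but with shifted parameters $\mu_d = \lVert \vect{q}_0^{(i)} - \vect{\Delta}_t \rVert^2$ and $\sigma_d^2 = 4t\sigma^2 \lVert \vect{q}_0^{(i)} - \vect{\Delta}_t \rVert^2$, which now depend on the inner product $\langle \vect{q}_0^{(i)}, \vect{\Delta}_t \rangle$, i.e.\ on the direction of the particle offset with respect to the platform trajectory.

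Next, I would select a pair of particles with $d_0^{(i)} = d_0^{(j)}$ but lying at different angular positions around $\vect{c}_{\Pi,0}$, chosen so that $\langle \vect{q}_0^{(i)} - \vect{q}_0^{(j)}, \vect{\Delta}_t \rangle \neq 0$. Since by assumption the platform actually moves, one has $\vect{\Delta}_t \neq \vect{0}_2$ for some $t>0$, and any circle centred at $\vect{c}_{\Pi,0}$ contains particles whose offsets project differently onto $\vect{\Delta}_t$; thus such a pair exists generically. For this pair, the squared-distance distributions differ in both $\mu_d$ and $\sigma_d^2$, and consequently the induced distributions of the RF likelihood values $p(z_{\text{RF},t}|\vect{p}_t^{(i)},\vect{s}_t)$, obtained by composing the strictly monotonic log-distance map \eqref{eq:path_loss_model} with the Gaussian density \eqref{eq:likelihood_RF}, also differ. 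Propagating these asymmetric likelihoods through the recursive update \eqref{eq:update}, the expectations $\mathbb{E}[\omega_t^{(i)}|z_{\text{RF},1:t}]$ and $\mathbb{E}[\omega_t^{(j)}|z_{\text{RF},1:t}]$ are no longer equal.

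The main obstacle is ensuring that the discrepancy in distance distributions genuinely propagates into a discrepancy in \emph{expected} weights, since in principle a non-linear integration could collapse two distinct distributions to the same mean. I would handle this either by invoking strict monotonicity of the path-loss map together with the injectivity of the moment-generating functional of the shifted non-central $\chi^2$-type distribution, or, more constructively, by fixing a specific symmetric configuration (for instance, $\vect{q}_0^{(i)}$ and $\vect{q}_0^{(j)}$ diametrically opposed on a circle and $\vect{\Delta}_t$ parallel to $\vect{q}_0^{(i)} - \vect{q}_0^{(j)}$) and verifying the strict inequality by direct computation, thereby establishing the existence claim in the theorem.
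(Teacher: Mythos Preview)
Your proposal is correct and follows essentially the same route as the paper: expand the squared particle--platform distance, show that its distribution depends on the inner product $\langle \vect{p}_0^{(i)} - \vect{c}_{\Pi,0}, \bar{\vect{u}}_{t-1}\rangle$ and not merely on $d_0^{(i)}$, and then pick two equidistant particles with different projections onto the cumulative control to break the symmetry. The only cosmetic difference is that you absorb the deterministic drift $\vect{\Delta}_t$ into the Gaussian part (yielding a central $\chi^2(2)$ with $\mu_d = \lVert \vect{q}_0^{(i)} - \vect{\Delta}_t\rVert^2$), whereas the paper keeps $d_0^{(i),2}$ in $\mu_d$ and pushes the drift into a non-central $\chi^2(2,\lambda)$; your grouping is arguably cleaner, and your explicit discussion of why distinct distance distributions actually yield distinct expected weights is more careful than the paper's ``in general'' conclusion.
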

\begin{proof}
The platform planar dynamics can be equivalently written as
\vspace{-0.2cm}
\begin{equation}
    \vect{c}_{\Pi,t}= \vect{c}_{\Pi,0} + \bar{\vect{u}}_{t-1}, \quad \bar{\vect{u}}_{t-1} := \sum_{k=0}^{t-1}\vect{u}_{\vect{c}_\Pi,k}, \; t>0.
\end{equation}
With similar computations involved of \mbox{Th. \ref{th:axis_symmetric_ambiguity}}, it is possible to show that the squared distance \mbox{$d_t^{(i),2} := d(\vect{p}_t^{(i)},\vect{c}_{\Pi,t})^2$} is distributed as 
\begin{equation}\label{eq:distance_distribution_th2}
    d_t^{(i),2} \sim
    \mathcal{N}(d^2 |\mu_d,\sigma_d^2 ) +  t\sigma^2  \chi^2(2,\lambda), \; t>0
\end{equation}
with
\begin{equation}\label{eq:parameters_distance_th2}
\begin{split}
    & \mu_d = d_0^{(i),2} - 2(\vect{p}_0^{(i)} - \vect{c}_{\Pi,0})^\top \bar{\vect{u}}_{t-1}, \;\; \sigma_d^2 = 4t\sigma^2 d_0^{(i),2}\\
    & \lambda = -\sum_{\ell=1}^2 \bar{u}_{\vect{c}_\Pi,t-1}(\ell)^2.
\end{split}
\end{equation}
$\chi^2(2,\lambda)$ is a non-central chi-squared distribution.
From \mbox{\eqref{eq:distance_distribution_th2}-\eqref{eq:parameters_distance_th2}}, $d_t^{(i),2}$ depends on both $d_0^{(i),2}$ and \mbox{$(\vect{p}_0^{(i)} - \vect{c}_{\Pi,0})^\top \bar{\vect{u}}_{t-1}$}. It is always possible to find $(i,j) \in [1,N_s]$, such that $d_0^{(i)}=d_0^{(j)}$ and \mbox{$(\vect{p}_0^{(i)} \!-\! \vect{c}_{\Pi,0})^\top \bar{\vect{u}}_{t-1} \! \neq \! (\vect{p}_0^{(j)} \! - \! \vect{c}_{\Pi,0})^\top \bar{\vect{u}}_{t-1}$}. Then, in general, 
$
 \mathbb{E} \left[ \omega_t^{(i)} | z_{\text{RF},1:t}\right] \neq \mathbb{E} \left[ \omega_t^{(j)} | z_{\text{RF},1:t}\right]. 
$
\end{proof}

\begin{theorem}\label{th:biModality_benefit}
Let the following hypotheses hold
\begin{enumerate}
\item the target moves according to an unbiased random walk, i.e.,  \mbox{$p(\vect{p}_{t+1}|\vect{p}_t) = \mathcal{N}(\vect{p}|\vect{p}_t,\sigma^2\vect{I}_2)$};
\item the platform is static, i.e. $\vect{c}_t = \vect{c}; \; \forall t$;
\item the RBE scheme updates through \eqref{eq:update} exploiting the radio-visual likelihood \eqref{eq:bimodal_likelihood_radioVisual}.
\end{enumerate}
Then, 
\begin{equation}\label{eq:non_axisSymmetric_condition}
\begin{split}
    & \exists (i,j) \in [1,N_s] \text{ s.t. } d(\vect{p}_0^{(i)},\vect{c}_{\Pi}) = d(\vect{p}_0^{(j)},\vect{c}_{\Pi}) \text{ and } \\
    & \mathbb{E} \left[ \omega_t^{(i)} | z_{\text{RF},1:t}\right] \neq \mathbb{E} \left[ \omega_t^{(j)} | z_{\text{RF},1:t}\right], \; t>0
\end{split}
\end{equation}
\end{theorem}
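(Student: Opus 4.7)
The plan is to show that, unlike the RF-only situation of Theorem~\ref{th:axis_symmetric_ambiguity}, the visual likelihood \eqref{eq:likelihood_V} destroys the rotational symmetry of the weight distribution about $\vect{c}_\Pi$. Under the bi-modal likelihood \eqref{eq:bimodal_likelihood_radioVisual}, the recursion \eqref{eq:update} factorises $\omega_t^{(i)}$ into an RF factor and a visual factor. Repeating the distance-distribution computation of Theorem~\ref{th:axis_symmetric_ambiguity} on the RF factor shows that it is identically distributed for any pair of indices $i,j$ with $d(\vect{p}_0^{(i)},\vect{c}_\Pi)=d(\vect{p}_0^{(j)},\vect{c}_\Pi)$, so given $z_{\text{RF},1:t}$ the RF contributions to the two conditional expectations coincide, and any discrepancy must be produced by the visual factor alone.

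I would then exhibit an explicit pair that breaks the symmetry. Since the FoV footprint $\Phi(\vect{s})$ is a bounded proper subset of $\Pi$, there is a radius for which the circle $\{\vect{p}\in\Pi:d(\vect{p},\vect{c}_\Pi)=d_0^{(i)}\}$ meets both $\Phi(\vect{s})$ and its complement; choose $\vect{p}_0^{(i)}\in\Phi(\vect{s})$ and $\vect{p}_0^{(j)}\notin\Phi(\vect{s})$ on that circle. By the piecewise definition \eqref{eq:likelihood_V}, the visual factor assigned to $\vect{p}_k^{(j)}$ collapses to $1$ when $D_k=0$ and to $0$ when $D_k=1$, whereas $\vect{p}_k^{(i)}$ receives either the Gaussian pixel density scaled by the strictly positive weight $\Upsilon(\vect{p}_k^{(i)},\vect{s})$ or the factor $1-\Upsilon(\vect{p}_k^{(i)},\vect{s})$ from \eqref{eq:POD}, according to the value of $D_k$.

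To close the argument I would marginalise over $(D_{1:t},\vect{z}_{c,1:t})$, generated from the true target trajectory, and over the particle random walk itself, taking the resulting expectations conditionally on $z_{\text{RF},1:t}$. The in-FoV expectation retains a strictly positive continuous contribution coming from the Gaussian pixel likelihood, while the out-of-FoV expectation is a product of Bernoulli-type terms depending only on the true target's resolution through $\Upsilon$ and cannot reproduce the dependence on $\vect{P}(\vect{s})\widetilde{\vect{p}}_k^{(i)}$. Combining the two unequal expected visual factors with the equal RF factors then yields \eqref{eq:non_axisSymmetric_condition}. The main obstacle is handling the piecewise nature of \eqref{eq:likelihood_V} rigorously, in particular the zero branch when a detection occurs but the particle lies outside the FoV; a case split on $D_k$ at each step, together with the strict positivity of $\Upsilon$ on $\Phi(\vect{s})$ granted by \eqref{eq:Upsilon}, is what I would use to rule out accidental cancellations between the two conditional expectations.
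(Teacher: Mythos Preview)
Your proposal follows essentially the same route as the paper's own proof: invoke Theorem~\ref{th:axis_symmetric_ambiguity} to conclude that the RF contribution is identically distributed for two particles at the same initial radius from $\vect{c}_\Pi$, then exhibit a pair with one particle inside and one outside $\Phi(\vect{s})$ so that the visual likelihood \eqref{eq:likelihood_V} applies different branches to them, breaking the symmetry of the conditional expectation. The paper's argument is considerably terser---it simply observes that $\omega_t^{(i)}$ and $\omega_t^{(j)}$ become ``different functions of two equally distributed random variables'' and concludes the inequality holds ``in general''---whereas you spell out the factorisation, the marginalisation over $(D_{1:t},\vect{z}_{c,1:t})$, and the case split on $D_k$; but the underlying idea is the same.
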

\begin{proof}
Suppose (w.l.o.g.) that $\vect{p}_0^{(i)}$ and $\vect{p}_0^{(j)}$ satisfy \mbox{$d(\vect{p}_0^{(i)},\vect{c}_{\Pi}) = d(\vect{p}_0^{(j)},\vect{c}_{\Pi})$}; hence, $d_t^{(i),2}$ and $d_t^{(j),2}$ are equally distributed, from Th. \ref{th:axis_symmetric_ambiguity}. Suppose now that \mbox{$\vect{p}_t^{(i)} \in \Phi(\vect{s}_t)$}, but \mbox{$\vect{p}_t^{(j)} \not\in \Phi(\vect{s}_t)$}. From \eqref{eq:likelihood_V}-\eqref{eq:bimodal_likelihood_radioVisual}, $\omega_t^{(i)}$ and $\omega_t^{(j)}$ are different functions of two equally distributed random variables (once $z_{\text{RF},1:t}$ are fixed). Thus, in general,
$
    \mathbb{E} \left[ \omega_t^{(i)} | z_{\text{RF},1:t}\right] \neq \mathbb{E} \left[ \omega_t^{(j)} | z_{\text{RF},1:t}\right].
$
\end{proof}



\begin{figure*}[htp!]
\vspace{0.2cm}
\centering
\subfigure[]{
\includegraphics[width=0.3\textwidth]{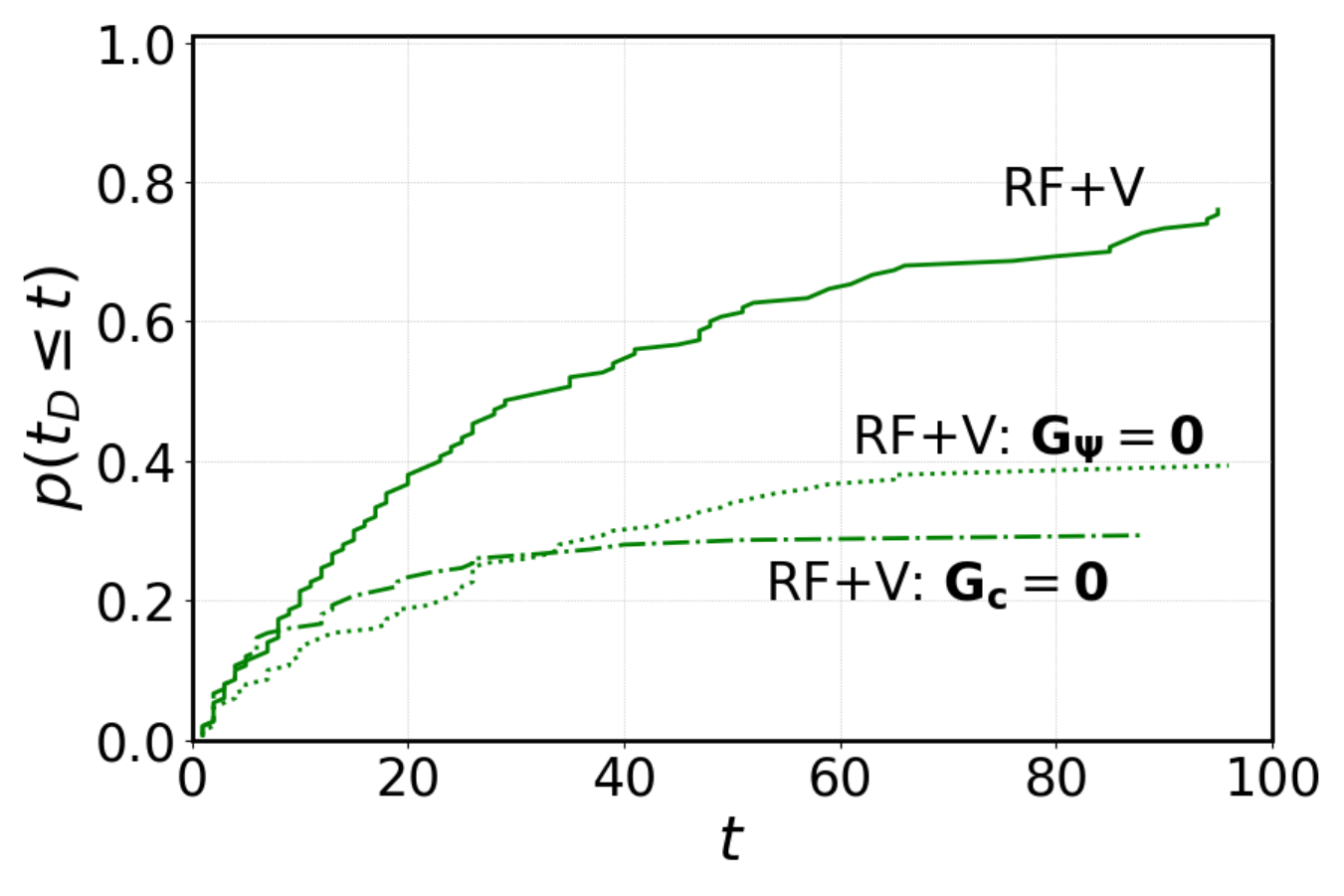}
\label{fig:controlSpace}
}
\subfigure[]{
\includegraphics[width=0.3\textwidth]{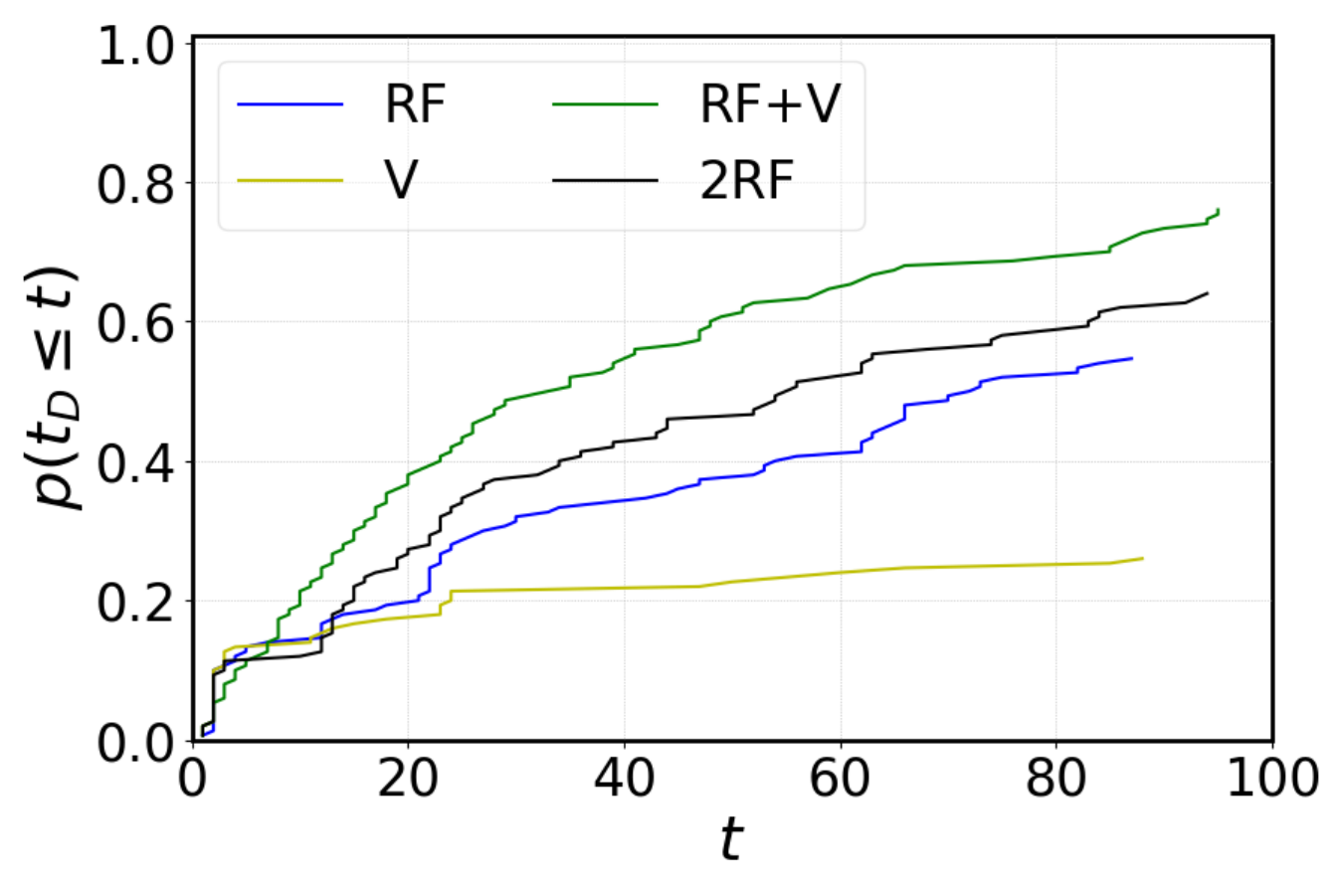}
\label{fig:modalities}
}
\subfigure[]{
\includegraphics[width=0.3\textwidth]{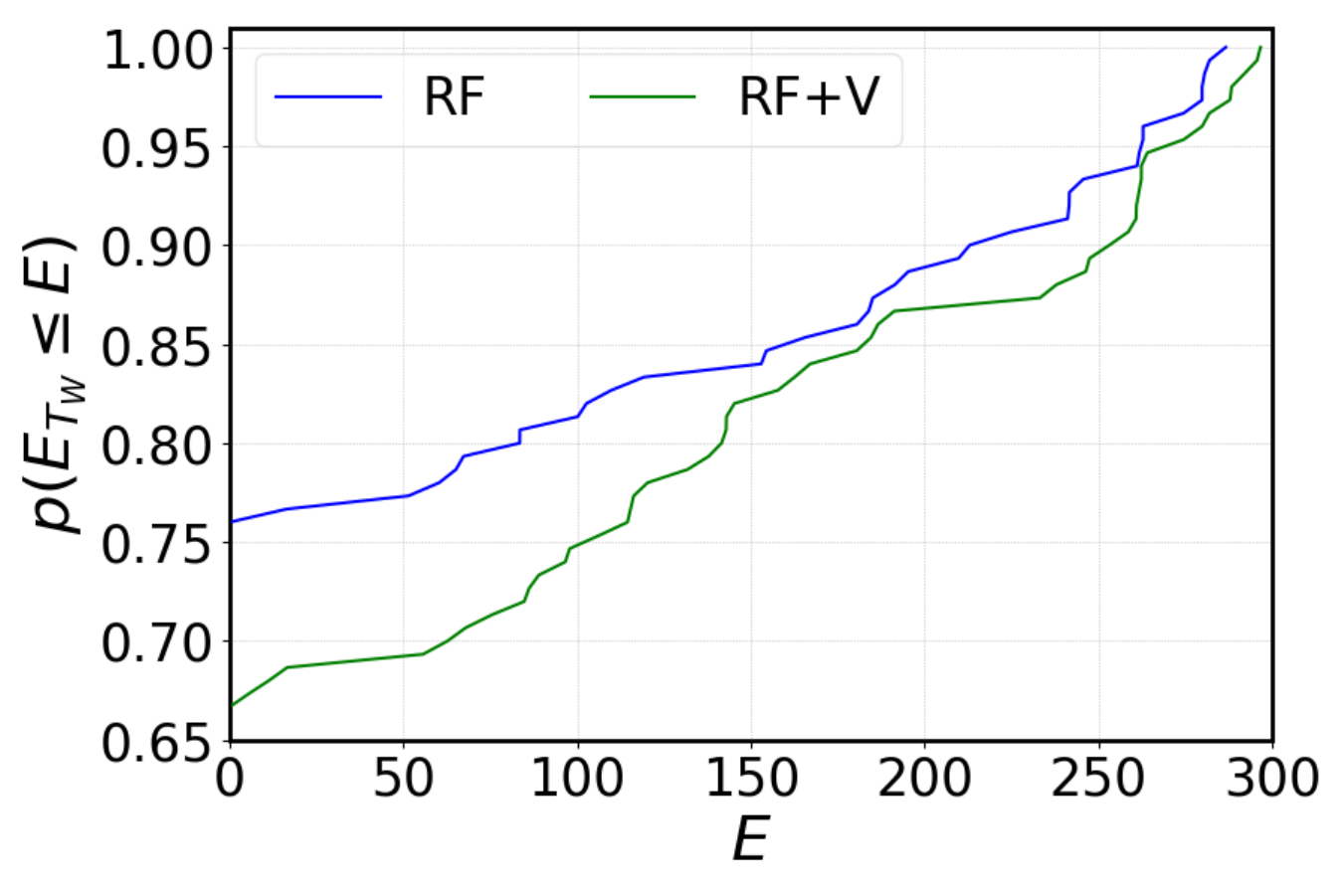}
\label{fig:energy}
}
\vspace{-0.2cm}
\caption{MC experiment results. (a) Detection time ECDF with different control spaces: full camera pose (RF+V), no pan-tilt movements (RF+V: $\vect{G}_{\bm{\Psi}}=\vect{0}$), static platform (RF+V: $\vect{G}_{\vect{c}}=\vect{0}$). (b) Detection time ECDF with different type and amount of information employed for the tracking task: radio-visual (RF+V), visual-only (V), radio-only (RF), and radio-only with two receivers ($2$RF). (c) Energy consumption ECDF between RF+V and RF.}
\label{fig:numerical_results}
\vspace{-0.5cm}
\end{figure*}

\textbf{Discussion - }
Th. \ref{th:axis_symmetric_ambiguity} statistically characterizes the {\em axis-symmetric ambiguity}, one of the main issues in RSSI-based localization. As \eqref{eq:path_loss_model} suggests, RSSI values bring information only on the target-receiver distance; consequently, 
if the receiver is static, the belief map is toroidal with non-unique MAP estimate and severe convergence issues may arise.
The axis-symmetric effect can be mitigated by using a moving receiver (Th. \ref{th:nonStatic_Rx}), or by aggregating visual information to RSSI data (Th. \ref{th:biModality_benefit}). In particular, to completely exploit the disambiguation effect of visual observations, camera orientation should be included in the control space, as in \eqref{eq:sensor_dynamics}. 

In conclusion, combining radio-visual cues in the RBE scheme, and comprising the entire camera pose in $\mathcal{A}$, reduces the estimation ambiguities. This makes the localization procedure faster and more accurate. 
Accordingly, the overall visual detection is more robust and more efficient (see \mbox{Sec. \ref{sec:numerical}}).

\section{Numerical results}\label{sec:numerical}

To support the theoretical results, the control law
$\mathcal{C}$ is used to track a moving target in a Python-based synthetic environment.
At first, we numerically motivate the choice of the entire camera pose as control space. Secondly, we show that bi-modality allows to achieve higher robustness and time-efficiency than several uni-modal baselines. Finally, the proposed algorithm is proven to be even more energy-efficient than a radio-only counterpart, albeit $\mathcal{C}$ does not account for any energy-preserving term.  

\textbf{Setup parameters - }
To capture the performance variability, numerical evaluation is performed through a Monte Carlo (MC) experiment, with $N_{tests} = 150$ tests of $T_W=100$ iterations each.
The underlying target motion is simulated using a linear stochastic Markovian transition planar model
\begin{equation}
 \vect{p}_{t+1} = \vect{p}_{t} + \vect{q}_{t}; \quad
 \vect{q}_{t} \sim \mathcal{N}(\vect{q}|\bm{\mu}_{\vect{q}},\bm{\Sigma}_{\vect{q}}), \; \bm{\mu}_{\vect{q}} \neq \vect{0}_2.
\end{equation}
The initial condition $\vect{p}_{0}$ is randomly changed at each MC test, as well as the platform initial planar position $\vect{c}_{\Pi,0}$. The process model is an unbiased random walk, the receiver sampling rate is $T_{\text{RF}}=10$ ($T_a=1$), the total available energy is fixed to $E_{tot}=300$, and the RSSI noise level is $\sigma_{\text{RF}}=3.5 \; [dBm]$.

\textbf{Performance assessment - }
The following performance indexes are computed from the MC simulation, recalling the definition of Empirical Cumulative Distribution Function (ECDF):
    \begin{equation}\label{eq:ecdf}
    p(q \! \leq \! Q) \!=\! \frac{1}{N_{tests}} \! \sum_{j=1}^{N_{tests}} \! \mathds{1}_{q_j\leq Q}, \;
    \mathds{1}_{q_j\leq Q} \!=\! 
    \begin{cases}
    1, \! & \! \text{if } q_j \! \leq \! Q  \\
    0, \! & \! \text{otherwise}
    \end{cases}
    \end{equation}
    where $q_j$ is the generic variable $q$ at the $j$-th MC test.

\textit{Detection success rate}: rate of successful target visual detections over the MC tests (i.e., robustness index)
\begin{equation}\label{eq:detection_success_rate}
\begin{split}
& \varpi = \frac{1}{N_{tests}}\sum_{j=1}^{N_{tests}} \mathds{1}_{D,j} \\
& 
\mathds{1}_{D,j} = 
\begin{cases}
1, & \text{if } \exists t_{D} \in [1,T_W] \; \text{s.t. } D_{t_{D}}=1  \\
0, & \text{otherwise}
\end{cases}
\end{split}
\end{equation}

\textit{Detection time ECDF}: time-efficiency index accounting for the time, $t_D$, before the target is visually detected. Its ECDF is obtained from \eqref{eq:ecdf} with $q=t_D$ and $Q=t$.

\textit{Energy consumption ECDF}: energy-efficiency index accounting for the available energy at the end of the task, i.e., \mbox{$E_{T_W} \in [0,E_{tot}]$}. Its ECDF is obtained from \eqref{eq:ecdf} with $q=E_{T_W}$ and $Q=E$.


\textbf{Impact of the control space - }
The proposed bi-modal radio-visual approach (RF+V) is compared with two variants: the first fixes the camera downwards (i.e., \mbox{$\vect{G}_{\bm{\Psi}}=\vect{0}$} and $\alpha_t=\beta_t=0, \; \forall t$); the second considers a static platform (i.e., $\vect{G}_{\vect{c}}=\vect{0}$). Without pan-tilt actuation, the target is inside the FoV only when underneath the UAV.
However, this requirement is difficult to be met for every initial condition $\vect{p}_0$ and $\vect{c}_0$, due to localization errors and to the bias in the target motion (i.e., $\bm{\mu}_{\vect{q}}$). 
On the other side, a static platform is not capable to reduce its distance w.r.t. the target; hence, many detection failures may occur even when the target is in FoV, according to \eqref{eq:POD}-\eqref{eq:Upsilon}.
Indeed, Tab.~\ref{tab:detection_rate} (top row) shows that the detection success rate of RF+V is $77 \%$, against $39 \%$ and $29 \%$ of the cases $\vect{G}_{\bm{\Psi}}=\vect{0}$ and $\vect{G}_{\vect{c}}=\vect{0}$, respectively. Moreover, Fig. \ref{fig:controlSpace} shows a better detection time ECDF of RF+V w.r.t the other two versions. 
In conclusion, including the entire camera pose in the control space brings more robustness and more time-efficiency.


\begin{table}[t]
\centering
\caption{Detection success rate of the compared approaches.}
\label{tab:detection_rate}
\begin{tabular}{c| c|c|c}
\hline
\rowcolor{lightgray}
 & RF+V & RF+V: $\vect{G}_{\bm{\Psi}}=\vect{0}$ & RF+V:  $\vect{G}_{\vect{c}}=\vect{0}$ \\
$\varpi$ &  $77\%$ & $39\%$ & $29\%$  \\
\hline
\rowcolor{lightgray}
& RF & V & 2RF  \\
$\varpi$ &  $55\%$ & $26\%$ & $64\%$    \\
\hline
\end{tabular}
\vspace{-0.5cm}
\end{table}

\textbf{Impact of the sensing modalities - }
Here RF+V is compared with a visual-only (V) and a radio-only (RF) variant.
According to Tab. \ref{tab:detection_rate} (bottom row) and Fig. \ref{fig:modalities}, RF+V is the most robust and time-efficient among the three algorithms. Indeed,
bi-modality induces higher localization accuracy with faster convergence rates. Consequently, the platform and the FoV are quickly driven towards the target, which is fundamental to have successful visual detection, according to \eqref{eq:POD}-\eqref{eq:Upsilon}.

One may argue that the superior performance of RF+V is only due to the larger amount of information involved. Thus, we compare RF+V with $2$RF: instead of combining radio-visual data, we aggregate observations from two receivers. 
The RSSI sample from the \mbox{$\ell$-th} receiver is $z_{\text{RF}^{(\ell)},t}$. 
Supposing independence between $z_{\text{RF}^{(1)},t}$ and $z_{\text{RF}^{(2)},t}$, the likelihood of $2$RF becomes
\vspace{-0.2cm}
\begin{equation}
p(\vect{z}_{2\text{RF},t} |\vect{p}_t , \vect{s}_t )  = \prod_{\ell=1}^2  p(z_{\text{RF}^{(\ell)},t}|\vect{p}_t, \vect{s}_t),
\end{equation}
with $p(z_{\text{RF}^{(\ell)},t}|\vect{p}_t, \vect{s}_t)$ as in \eqref{eq:likelihood_RF}.
Tab. \ref{tab:detection_rate} and Fig. \ref{fig:modalities} show that $2$RF improves RF, but RF+V is still better. This means that 
exploiting complementary cues (e.g. radio and visual ones) is more advantageous than combining homogeneous data extracted from different sources; in fact, one can prove that the axis-symmetric ambiguity is not reduced when applying multiple receivers placed at the same location.


\textbf{Energy-efficiency - }
As mentioned in Sec.~\ref{sec:method}, the proposed control law~\eqref{eq:control_input} does not include any explicit energy-aware term; hence, $\mathcal{C}$ is not expected to generate energy-preserving platform movements (unless $\vect{G}_{\vect{c}}$ is kept small, but this has been proven to be ineffective). Therefore, when the total available energy runs out, i.e.
\begin{equation}\label{eq:out_of_energy_condition}
\exists t \in [1,T_W] \text{ s.t. } E_{t}=0,     
\end{equation}
the platform becomes static and the detection capabilities dramatically decrease, as in \mbox{Fig. \ref{fig:controlSpace}}. In the absence of an energy-aware control technique, the only way to preserve energy is by producing accurate target position estimates: if $\hat{\vect{p}}_t \approx \vect{p}_t$, the controller $\mathcal{C}$ drives the platform towards the target through a smooth and direct trajectory, which is more energy-efficient than irregular patterns, according to \eqref{eq:energy_model}. Consequently, the energy-efficiency of an energy-agnostic tracking algorithm is an indirect measure of its localization accuracy. In this regard, \mbox{Fig. \ref{fig:energy}} shows the superiority of RF+V w.r.t. RF. More specifically, the two ECDFs satisfy
\begin{equation}
    p(E_{T_W}^{\text{RF+V}} \leq E) < p(E_{T_W}^{\text{RF}} \leq E);\; \forall E \in [0,E_{tot}]
\end{equation}
and the out-of-energy condition \eqref{eq:out_of_energy_condition}
is higher in RF ($76\%$) than in RF+V ($68\%$).

\section{Conclusion }\label{sec:conclusion}

This work proposes a probabilistic radio-visual active sensing scheme for RF-emitting target search. The suggested approached is supported by a theoretical analysis and validated via numerical experiments. These highlight the benefits of bi-modality in terms of robustness, as well as time and energy efficiency.





{\renewcommand{\baselinestretch}{0.988}
\bibliographystyle{IEEEtran}
\bibliography{IEEEfull,References}
}
\vfill
\end{document}